\theoremstyle{plain}
\newtheorem{theorem}{Theorem}[section]
\newtheorem{proposition}[theorem]{Proposition}
\newtheorem{lemma}[theorem]{Lemma}
\theoremstyle{definition}
\newtheorem{definition}[theorem]{Definition}
\theoremstyle{remark}
\newcommand{\rev}[1]{#1}
\newcommand{\defeq}{\vcentcolon=}
\def\eps{\varepsilon}
\def\E{\mathbb E}
\newcommand{\reals}{\mathbb{R}}
\title{Long-term Forecasting with \\TiDE: Time-series Dense Encoder}
\author{\name Abhimanyu Das \email abhidas@google.com \\
      \addr Google Research
      \AND
      \name Weihao Kong \email weihaokong@google.com \\
      \addr Google Research
      \AND
      \name Andrew Leach \email andrewleach@google.com \\
      \addr Google Cloud
      \AND
      \name Shaan Mathur \email shaanmathur@google.com \\
      \addr Google Cloud
      \AND
      \name Rajat Sen\email senrajat@google.com \\
      \addr Google Research
      \AND
      \name Rose Yu\email q6yu@ucsd.edu \\
      \addr University of California, San Diego
      }
\newcommand\blfootnote[1]{%
  \begingroup
  \renewcommand\thefootnote{}\footnote{#1}%
  \addtocounter{footnote}{-1}%
  \endgroup
}
\begin{document}

\maketitle

\begin{abstract}
    Recent work has shown that simple linear models can outperform several Transformer based approaches in long term time-series forecasting. Motivated by this, we propose a Multi-layer Perceptron (MLP) based encoder-decoder model, \underline{Ti}me-series \underline{D}ense \underline{E}ncoder (TiDE), for long-term time-series forecasting that enjoys the simplicity and speed of linear models while also being able to handle covariates and non-linear dependencies. Theoretically, we prove that the simplest linear analogue of our model can achieve near optimal error rate for linear dynamical systems (LDS) under some assumptions. Empirically, we  show that our method can match or outperform prior approaches on popular long-term time-series forecasting benchmarks while being 5-10x faster than the best Transformer based model.\blfootnote{Author names are arranged in alphabetical order of last names.}
\end{abstract}

\section{Introduction}
Long-term forecasting, which is to predict several steps into the future  given  a long context or look-back, is one of the most fundamental problems in time series analysis, with broad applications in energy, finance, and transportation. Deep learning models \citep{wu2021autoformer,nie2022time} have emerged as a popular approach for forecasting rich, multivariate, time series data, often outperforming classical statistical approaches such as ARIMA or GARCH \citep{box2015time}. In several forecasting competitions such as the M5  competition \citep{makridakis2020m5} and IARAI Traffic4cast contest \citep{kreil2020surprising}, deep neural networks performed quite well.

Various neural network architectures have been explored for forecasting, ranging from recurrent neural networks to convolutional networks to graph neural networks.
For sequence modeling tasks in domains such as language, speech and vision, Transformers \citep{vaswani2017attention} have emerged as the most successful model, even outperforming recurrent neural networks (LSTMs)\citep{hochreiter1997long}.
Subsequently, there has been a surge of Transformer-based forecasting papers \citep{wu2021autoformer,zhou2021informer,zhou2022fedformer} in the time-series community that have claimed state-of-the-art (SoTA) forecasting performance for long-horizon tasks. However, recent work~\citep{zeng2022transformers} has shown that these Transformerss-based architectures may not be as powerful as one might expect for time series forecasting, and can  be easily outperformed  by a simple linear model on forecasting benchmarks. Such a linear model however has deficiencies since it is ill-suited for modeling non-linear dependencies among the time-series sequence and the time-independent covariates. Indeed, a very recent paper~\citep{nie2022time} proposed a new Transformer-based architecture that  obtains SoTA performance for deep neural networks on the standard multivariate forecasting benchmarks.


In this paper, we present a simple and effective deep learning architecture for long-term forecasting that obtains  superior performance when compared to existing SoTA neural network based models on the   time series forecasting benchmarks. Our Multi-Layer Perceptron (MLP)-based model is embarrassingly  simple without any self-attention, recurrent or convolutional mechanism. Therefore, it enjoys a linear computational scaling in terms of the context and horizon lengths unlike many Transformer-based solutions.

The \textit{main contributions} of this work are as follows:

\begin{itemize}
\item We propose the \underline{Ti}me-series \underline{D}ense \underline{E}ncoder (TiDE) model architecture for long-term time series forecasting.  TiDE encodes the past of a time-series along with covariates using dense MLPs and then decodes  time-series along with future covariates, again using dense MLPs.  

\item We analyze a simplified linear analogue of our model and prove that this linear model can achieve near optimal error rate in linear dynamical systems (LDS)~\citep{kalman1963mathematical} when the design matrix of the LDS has maximum singular value bounded away from $1$. We empirically verify this on a simulated dataset where the linear model outperforms LSTMs and Transformers.  

\item On popular real-world long-term forecasting benchmarks, our model achieves better or similar performance compared to prior neural network based baselines ($>$10\% lower Mean Squared Error on the largest dataset). At the same time, TiDE is 5x faster in terms of inference and more than 10x faster in training when compared to the best Transformer based model.
\end{itemize}



\section{Background and Related Work}
\label{sec:rwork}
Models for long-term forecasting can be broadly divided into either \textit{multivariate} models or \textit{univariate} models.

Multivariate models use the past of all the interrelated time-series variables and predict the future of all the time-series as a joint function of those pasts. This includes the classical VAR models~\citep{zivot2006vector}. We will mostly focus on the prior work on neural network based models for long-term forecasting. LongTrans~\citep{li2019enhancing} uses attention layer with LogSparse design to capture local information with near linear space and computational complexity. Informer~\citep{zhou2021informer} uses the ProbSparse self-attention mechanism to achieve sub-quadratic dependency on the length of the context. Autoformer~\citep{wu2021autoformer} uses trend and seasonal decomposition with sub-quadratic self attention mechanism. FEDFormer~\citep{zhou2022fedformer} uses a frequency enchanced structure while Pyraformer~\citep{liu2021pyraformer} uses pyramidal self-attention that has linear complexity and can attend to different granularities. The common theme among the above works is the use of sub-quadratic approximations for the full self attention mechanism, that have been also been used in other domains~\citep{wang2020linformer}.

On the other hand, univariate models predict the future of a time-series variable as a function of only the past of the same time-series and covariate features. In other words, the past of other time-series is not part of the input during inference. There are two kinds of univariate models, \textit{local} and \textit{global}. Local univariate models are usually trained per time-series variable and inference is done per time-series as well. Different variables have different models. Classical models like AR, ARIMA,  exponential smoothing models~\citep{mckenzie1984general} and the Box-Jenkins methodology~\citep{box1968some} belong in this category. We would refer the reader to~\citep{box2015time} for an in depth discussion of these methods.

Global univariate models ignore the variable information and train one shared model for all the time series on the whole dataset. This category mainly includes deep learning based architectures like~\citep{salinas2020deepar}. In the context of long-term forecasting, recently it was observed that a simple linear global univariate model can outperform the transformer based multivariate approaches for long-term forecasting~\citep{zeng2022transformers}. DLinear~\citep{zeng2022transformers} learns a  linear mapping from context to horizon, pointing to deficiencies in sub-quadratic approximations to the self-attention mechanism. Indeed, a very recent model, PatchTST~\citep{nie2022time} has shown that feeding contiguous patches of time-series as tokens to the vanilla self-attention mechanism can beat the performance of DLinear in long-term forecasting benchmarks.
MLPs have been used for time-series forecasting in the popular N-BEATS model~\citep{oreshkinn} and later extended in a follow up work~\citep{challu2022nhits} that uses multi-rate sampling for better efficiency. However, these methods do not explicitly mention supporting covariates and fall short of PatchTST in long-horizon benchmarks.

Recent work has improved the efficacy of RNNs~\citep{kag2020rnns, lukovsevivcius2022time, rusch2020coupled, li2019deep} and applied parameter efficient SSMs~\citep{guefficiently, gupta2022diagonal} to modeling long range dependencies in sequences. They have demonstrated improvement over some transformer based architectures on sequence modeling benchmarks including speech and 1-D pixel level image classification tasks. We compare our method to S4 model~\citep{guefficiently}, which is the only such method that has been applied to global univariate and global multivariate forecasting.

Note that all categories of models can be fairly compared on the task of multivariate long-term forecasting if they are evaluated on the same test set for the same task, which is the protocol that we follow in Section~\ref{sec:exp}.

\section{Problem Setting}
\label{sec:psetting}
Before we describe the problem setting we will need to setup some general notation.

\subsection{Notation}
We will denote matrices by bold capital letters like $\bX \in \reals^{N \times T}$. The slice notation $i:j$ denotes the set $\{i, i+1, \cdots j\}$ and $[n]:= \{1, 2, \cdots, n\}$. The individual rows and columns are always treated as column vectors unless otherwise specified. We can also use sets to select sub-matrices i.e $\bX[\cI, \cJ]$ denotes the sub-matrix with rows in $\cI$ and columns in $\cJ$. $\bX[:, j]$ means selecting the $j$-th column while $\bX[i, :]$ means the $i$-th row. The notation $[\bv; \bu]$ will denote the concatenation of the two column vectors and the same notation can be used for matrices along a dimension.

\subsection{Multivariate Forecasting}
\label{sec:probfor}
In this section we first abstract out the core problem in long-term multivariate forecasting. There are $N$ time-series in the dataset. The look-back of the $i$-th time-series will be denoted by $\*y^{(i)}_{1:L}$, while the horizon is denoted by $\*y^{(i)}_{L+1:L+H}$. The task of the forecaster is to predict the horizon time-points given access to the look-back.

In many forecasting scenarios, there might be dynamic and static covariates that are known in advance. With slight abuse of notation, we will use $\bx^{(i)}_t \in \reals^{r}$ to denote the $r$-dimensional dynamic covariates of time-series $i$ at time $t$. For instance, they can be global covariates (common to all time-series) such as day of the week, holidays etc or specific to a time-series for instance the discount of a particular product on a particular day in a demand forecasting use case. We can also have static attributes of a time-series denoted by $\ba^{(i)}$ such as features of a product in retail that do not change with time. In many applications, these covariates are vital for accurate forecasting and a good model architecture should have provisions to handle them.

The forecaster can be thought of as a function that maps the history $\*y^{(i)}_{1:L}$, the dynamic covariates $\bx^{(i)}_{1:L+H}$ and the static attributes  $\ba^{(i)}$ to an accurate prediction of the future, i.e.,
\begin{equation}
    f: \left( \left\{\*y^{(i)}_{1:L}\right\}_{i=1}^{N}, \left\{\bx^{(i)}_{1:L+H}\right\}_{i=1}^{N}, \left\{\ba^{(i)}\right\}_{i=1}^N \right) \longrightarrow \left\{\hat{\*y}^{(i)}_{L+1:L+H}\right\}_{i=1}^{N}.
    \label{eq:all}
\end{equation}

The accuracy of the prediction will be measured by a metric that quantifies their closeness to the actual values. For instance, if the metric is Mean Squared Error (MSE), then the goodness of fit is measured by,
\begin{align}
    \mathrm{MSE} \left(\left\{\*y^{(i)}_{L+1:L+H}\right\}_{i=1}^{N},  \left\{\hat{\*y}^{(i)}_{L+1:L+H}\right\}_{i=1}^{N}\right) = \frac{1}{NH} \sum_{i=1}^{N}\norm{\*y^{(i)}_{L+1:L+H} - \hat{\*y}^{(i)}_{L+1:L+H}}_2^2.
    \label{eq:mse}
\end{align}
 
\section{Model}
\label{sec:model}

Recently, it has been observed that simple linear models~\citep{zeng2022transformers} can outperform Transformers based models in several long-term forecasting benchmarks. On the other hand, linear models will fall short when there are inherent non-linearities in the dependence of the future on the past. Furthermore, linear models would not be able to model the dependence of the prediction on the covariates as evidenced by the fact that~\citep{zeng2022transformers} do not use time-covariates as they hurt performance. 

In this section, we introduce a simple and efficient MLP based architecture for long-term time-series forecasting. In our model we add non-linearities in the form of MLPs in a manner that can handle past data and covariates. The model is dubbed \ours~(\underline{Ti}me-series \underline{D}ense \underline{E}ncoder) as it encodes the past of a time-series along with covariates using dense MLP's and then decodes the encoded time-series along with future covariates.

\begin{figure*}[tbh!]
    \centering
    \includegraphics[width=\linewidth]{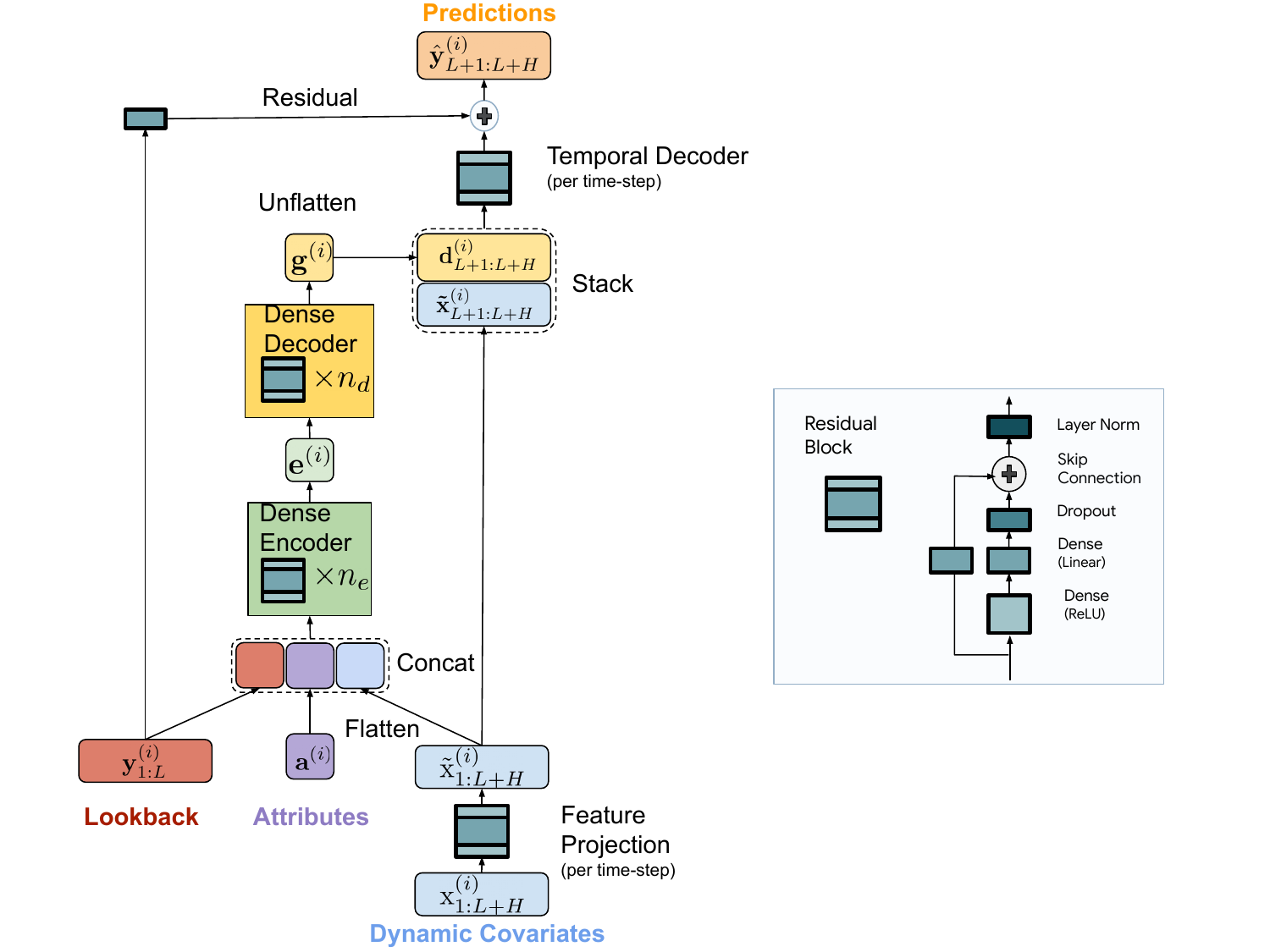}
    \caption{Overview of TiDE architecture. The dynamic covariates per time-point are mapped to a lower dimensional space using a feature projection step. Then the encoder combines the look-back along with the projected covariates with the static attributes to form an encoding. The decoder maps this encoding to a vector per time-step in the horizon. Then a temporal decoder combines this vector (per time-step) with the projected features of that time-step in the horizon to form the final predictions. We also add a global linear residual connection from the look-back to the horizon.}
    \label{fig:arch}
\end{figure*}

An overview of our architecture has been presented in Figure~\ref{fig:arch}. Our model is applied in a channel independent manner (the term was used in~\citep{nie2022time}) i.e the input to the model is the past and covariates of one time-series at a time $\left(\*y^{(i)}_{1: L}, \bx^{(i)}_{1:L}, \ba^{(i)}\right)$ and it maps it to the prediction of that time-series $\hat{\*y}^{(i)}_{L+1: L+H}$. Note that the weights of the model are trained globally using the whole dataset. A key component in our model is the MLP residual block towards the right of the figure. 

{\bf Residual Block.} We use the residual block as the basic layer in our architecture. It is an MLP with one hidden layer with ReLU activation. It also has a skip connection that is fully linear. We use dropout on the linear layer that maps the hidden layer to the output and also use layer norm at the output.

We separate the model into encoding and decoding sections. The encoding section has a novel feature projection step followed by a dense MLP encoder. The decoder section consists of a dense decoder followed by a novel temporal decoder. Note that the dense encoder (green block with $n_e$ layers) and decoder blocks (yellow block with $n_d$ layers) in Figure~\ref{fig:arch} can be merged into a single block. For the sake of exposition we keep them separate as we tune the hidden layer size in the two blocks separately. Also the last layer of the decoder block is unique in the sense that its output dimension needs to be $H \times p$ before the reshape operation. 

\subsection{Encoding}
\label{sec:encoder}

The task of the encoding step is to map the past and the covariates of a time-series to a dense representation of the features. The encoding in our model has two key steps.

{\bf Feature Projection.} We use a residual block to map $\bx^{(i)}_t$ at each time-step (both in the look-back and the horizon) into a lower dimensional projection of size $\tilde{r} \ll r$ (\texttt{temporalWidth}). This operation can be described as,
\begin{align}
    \tilde{\bx}^{(i)}_t = \mathrm{ResidualBlock} \left( \bx^{(i)}_t\right).
\end{align}

This is essentially a dimensionality reduction step since flattening the dynamic covariates for the whole look-back and horizon would lead to an input vector of size $(L + H)r$ which can be prohibitively large. On the other hand, flattening the reduced features would only lead to a dimension of $(L + H)\tilde{r}$.

{\bf Dense Encoder.}  As the input to the dense encoder, we stack and flatten all the past and  future projected covariates, concatenate them with the static attributes and the past of the time-series. Then we map them to an embedding using an encoder which contains multiple residual blocks. This can be written as,
\begin{align}
    \be^{(i)} = \mathrm{Encoder} \left(\*y^{(i)}_{1:L}; \tilde{\bx}^{(i)}_{1:L+H}; \ba^{(i)} \right)
\end{align}
The encoder internal layer sizes are all set to \texttt{hiddenSize} and the total number of layers in the encoder is set to $n_e$ (\texttt{numEncoderLayers}).

\subsection{Decoding}
\label{sec:dec}
The decoding in our model maps the encoded hidden representations into future predictions of time series. It also comprises of two operations, dense decoder and temporal decoder.

{\bf Dense Decoder.} The first decoding unit is a stacking of several residual blocks like the encoder with the same hidden layer sizes. It takes as an input the encoding $\be^{(i)}$ and maps it to a vector $\bg^{(i)}$ of size $H \times p$ where $p$ is the \texttt{decoderOutputDim}. This vector is then reshaped to a matrix $\bD^{(i)} \in \reals^{d \times H}$. The $t$-th column i.e $\bd^{(i)}_t$ can be thought of as the decoded vector for the $t$-th time-period in the horizon for all $t \in [H]$. This whole operation can be described as,
\begin{align*}
    \bg^{(i)} &= \mathrm{Decoder} \left( \be^{(i)} \right)~~~\in \reals^{p.H} \\
    \bD^{(i)} &= \mathrm{Reshape} \left( \bg^{(i)} \right)~~~\in \reals^{p \times H}.
\end{align*}
The number of layers in the dense decoder is $n_d$ (\texttt{numDecoderLayers}).

{\bf Temporal Decoder.} Finally, we use the temporal decoder to generate the final predictions. The temporal decoder is just a residual block with output size $1$ that maps the decoded vector $\bd^{(i)}_t$ at $t$-th horizon time-step concatenated with the projected covariates $\tilde{\bx}^{(i)}_{L+t}$ i.e
\begin{align*}
    \hat{\*y}^{(i)}_{L+t} = \mathrm{TemporalDecoder} \left( \bd^{(i)}_t; \tilde{\bx}^{(i)}_{L+t}\right)~~~\forall t \in [H].
\end{align*}

This operation adds a "highway" from the future covariates at time-step $L + t$ to the prediction at time-step $L+t$. This can be useful if some covariates have a strong direct effect on a particular time-step's actual value. For instance, in retail demand forecasting a holiday like Mother's day might strongly affect the sales of certain gift items. Such signals can be lost or take longer for the model to learn in absence of such a highway. We denote the hyperparameter controlling the hidden size of the temporal decoder as \texttt{temporalDecoderHidden}. 

Finally, we add a \textit{global residual connection} that linearly maps the look-back $\*y^{(i)}_{1:L}$ to a vector the size of the horizon which is added to the prediction $\hat{\*y}^{(i)}_{L+1:L+H}$. This ensures that a purely linear model like the one in~\citep{zeng2022transformers} is always a subclass of our model.

{\bf Training and Evaluation.} The model is trained using mini-batch gradient descent where each batch consists of a \texttt{batchSize} number of time-series and the corresponding look-back and horizon time-points. We use MSE as the training loss. Each epoch consists of all look-back and horizon pairs that can be constructed from the training period i.e. two mini-batches can have overlapping time-points. This was standard practice in all prior work on long-term forecasting~\citep{zeng2022transformers, liu2021pyraformer, wu2021autoformer, li2019enhancing}. 

The model is evaluated on a test set on every (look-back, horizon) pair that can be constructed from the test set. This is usually known as rolling validation/evaluation. A similar evaluation on a validation set can be used optionally used to tune parameters for model selection.

\section{Experimental Results}
\label{sec:exp}
In this section we present our main experimental results on popular long-term forecasting benchmarks. We also perform an ablation study that shows the usefulness of the temporal decoder.

\subsection{Long-Term Time-Series Forecasting}

{\bf Datasets.} We use seven commonly used long-term forecasting benchmark datasets: Weather, Traffic, Electricity and 4 ETT datasets (ETTh1, ETTh2, ETTm1, ETTm2). We  refer the reader to~\citep{wu2021autoformer} for a detailed discussion on the datasets. In Table~\ref{tab:datasets} we provide some statistics about the datasets. Note that Traffic and Electricity are the largest datasets with $>800$ and $>300$ time-series each having tens of thousands of time-points. Since we are only interested in long-term forecasting results in this section, we omit the shorter horizon ILI dataset.

\begin{table}[ht!]
\centering
\begin{tabular}{l|c|c|c}
\toprule
Dataset & \#Time-Series & \#Time-Points & Frequency \\
\midrule
Electricity & 321 & 26304 & 1 Hour \\
Traffic & 862 & 17544 & 1 Hour \\
Weather & 21 & 52696 & 10 Minutes \\
ETTh1   & 7 & 17420 & 1 Hour \\
ETTh2   & 7 & 17420 & 1 Hour \\
ETTm1   & 7 & 69680 & 15 Minutes \\
ETTm2   & 7 & 69680 & 15 Minutes \\
\bottomrule
\end{tabular}
\caption{Summary of datasets.}
\label{tab:datasets}
\end{table}

\begin{table*}[t]
	\centering
	\resizebox{\linewidth}{!}{
		\begin{tabular}{cc|c|cc|cc|cc|cc|cc|cc|cc|cc|ccc}
			\cline{2-21}
			&\multicolumn{2}{c|}{Models}& \multicolumn{2}{c|}{\ours} & \multicolumn{2}{c|}{PatchTST/64}& \multicolumn{2}{c|}{N-HiTS} & \multicolumn{2}{c|}{DLinear}& \multicolumn{2}{c|}{FEDformer}& \multicolumn{2}{c|}{Autoformer}& \multicolumn{2}{c|}{Informer}& \multicolumn{2}{c|}{Pyraformer}& \multicolumn{2}{c}{LogTrans}& \\
			\cline{2-21}
			&\multicolumn{2}{c|}{Metric}&MSE&MAE&MSE&MAE&MSE&MAE&MSE&MAE&MSE&MAE&MSE&MAE&MSE&MAE&MSE&MAE&MSE&MAE\\
			\cline{2-21}
			&\multirow{4}*{\rotatebox{90}{Weather}} & 96 & 0.166 & 0.222 & \textbf{0.149} & \textbf{0.198} & 0.158 & \textbf{0.195} & 0.176 & 0.237 & 0.238 & 0.314 & 0.249 & 0.329 & 0.354 & 0.405 & 0.896 & 0.556 & 0.458 & 0.490 \\
            &\multicolumn{1}{c|}{}& 192 & 0.209 & 0.263 & \textbf{0.194} & \textbf{0.241} & 0.211 & 0.247 & 0.220 & 0.282 & 0.275 & 0.329 & 0.325 & 0.370 & 0.419 & 0.434 & 0.622 & 0.624 & 0.658 & 0.589 \\
            &\multicolumn{1}{c|}{}& 336 & 0.254 & 0.301 & \textbf{0.245} & \textbf{0.282} & 0.274 & 0.300 & 0.265 & 0.319 & 0.339 & 0.377 & 0.351 & 0.391 & 0.583 & 0.543  & 0.739 & 0.753 & 0.797 & 0.652 \\
            &\multicolumn{1}{c|}{}& 720  & \textbf{0.313} & 0.340 & \textbf{0.314} & \textbf{0.334} & 0.401 & 0.413 & 0.323 & 0.362 & 0.389 & 0.409 & 0.415 & 0.426 & 0.916 & 0.705 & 1.004 & 0.934 & 0.869 & 0.675 \\
			\cline{2-21}
			&\multirow{4}*{\rotatebox{90}{Traffic}}& 96  & \textbf{0.336} & 0.253  & 0.360 & \textbf{0.249} & 0.402 & 0.282 & 0.410 & 0.282 & 0.576 & 0.359 & 0.597 & 0.371 & 0.733 & 0.410 & 2.085 & 0.468 & 0.684 & 0.384 \\
            &\multicolumn{1}{c|}{} & 192 & \textbf{0.346} & \textbf{0.257} & 0.379 & \textbf{0.256} & 0.420 & 0.297 & 0.423 & 0.287 & 0.610 & 0.380 & 0.607 & 0.382 & 0.777 & 0.435 & 0.867 & 0.467 & 0.685 & 0.390 \\
            &\multicolumn{1}{c|}{}& 336 & \textbf{0.355} & \textbf{0.260} & 0.392 & 0.264  & 0.448 & 0.313 & 0.436 & 0.296 & 0.608 & 0.375 & 0.623 & 0.387 & 0.776 & 0.434 & 0.869 & 0.469 & 0.734 & 0.408 \\
            &\multicolumn{1}{c|}{}& 720 & \textbf{0.386} & \textbf{0.273} & 0.432 & 0.286  & 0.539 & 0.353 & 0.466 & 0.315 & 0.621 & 0.375 & 0.639 & 0.395 & 0.827 & 0.466 & 0.881 & 0.473 & 0.717 & 0.396 \\
            \cline{2-21}
        	&\multirow{4}*{\rotatebox{90}{Electricity}}& 96 & \textbf{0.132} & 0.229  & \textbf{0.129} & \textbf{0.222} & 0.147 & 0.249 & 0.140 & 0.237 & 0.186 & 0.302 & 0.196 & 0.313 & 0.304 & 0.393 & 0.386 & 0.449 & 0.258 & 0.357 \\
			&\multicolumn{1}{c|}{}& 192 & \textbf{0.147} & 0.243 & \textbf{0.147} & \textbf{0.240} & 0.167 & 0.269 & 0.153 & 0.249 & 0.197 & 0.311 & 0.211 & 0.324 & 0.327 & 0.417 & 0.386 & 0.443 & 0.266 & 0.368 \\
			&\multicolumn{1}{c|}{}& 336 & \textbf{0.161} & 0.261 & 0.163 & \textbf{0.259} & 0.186 & 0.290 & 0.169 & 0.267 & 0.213 & 0.328 & 0.214 & 0.327 & 0.333 & 0.422 & 0.378 & 0.443 & 0.280 & 0.380 \\
			&\multicolumn{1}{c|}{}& 720 &  \textbf{0.196} & 0.294 & \textbf{0.197} & \textbf{0.290} & 0.243 & 0.340 & 0.203 & 0.301 & 0.233 & 0.344 & 0.236 & 0.342 & 0.351 & 0.427 & 0.376 & 0.445 & 0.283 & 0.376 \\
			\cline{2-21}
			&\multirow{4}*{\rotatebox{90}{ETTh1}}& 96  & \textbf{0.375} & 0.398 & 0.379 & {0.401}  & 0.378 & \textbf{0.393} & {0.375} & \textbf{0.399} & 0.376 & 0.415 & 0.435 & 0.446 & 0.941 & 0.769 & 0.664 & 0.612 & 0.878 & 0.740 \\
            &\multicolumn{1}{c|}{}& 192 & \textbf{0.412} & 0.422 & {0.413} & 0.429 & 0.427 & 0.436 & \textbf{0.412} & \textbf{0.420} & 0.423 & 0.446 & 0.456 & 0.457 & 1.007 & 0.786 & 0.790 & 0.681 & 1.037 & 0.824 \\
            &\multicolumn{1}{c|}{}& 336 & \textbf{0.435} & \textbf{0.433} & \textbf{0.435} & 0.436 & 0.458 & 0.484 & 0.439 & 0.443 & 0.444 & 0.462 & 0.486 & 0.487 & 1.038 & 0.784 & 0.891 & 0.738 & 1.238 & 0.932 \\
            &\multicolumn{1}{c|}{}& 720 & 0.454 & \textbf{0.465} & \textbf{0.446} & \textbf{0.464} & 0.472 & 0.561 & 0.501 & 0.490 & 0.469 & 0.492 & 0.515 & 0.517 & 1.144 & 0.857 & 0.963 & 0.782 & 1.135 & 0.852 \\
			\cline{2-21}
			&\multirow{4}*{\rotatebox{90}{ETTh2}}& 96  & \textbf{0.270} & \textbf{0.336} & 0.274 & \textbf{0.337} & 0.274 & 0.345 & 0.289 & 0.353 & 0.332 & 0.374 & 0.332 & 0.368 & 1.549 & 0.952 & 0.645 & 0.597 & 2.116 & 1.197 \\
            &\multicolumn{1}{c|}{}& 192  & \textbf{0.332} & 0.380 & {0.338} & \textbf{0.376}  & 0.353 & 0.401 & 0.383 & 0.418 & 0.407 & 0.446 & 0.426 & 0.434 & 3.792 & 1.542 & 0.788 & 0.683 & 4.315 & 1.635 \\
            &\multicolumn{1}{c|}{}& 336 & \textbf{0.360} & 0.407 & 0.363 & \textbf{0.397} & 0.382 & 0.425 & 0.448 & 0.465 & 0.400 & 0.447 & 0.477 & 0.479 & 4.215 & 1.642 & 0.907 & 0.747 & 1.124 & 1.604 \\
            &\multicolumn{1}{c|}{}& 720 &  0.419 & 0.451 & \textbf{0.393} & \textbf{0.430} & 0.625 & 0.557 & 0.605 & 0.551 & 0.412 & 0.469 & 0.453 & 0.490 & 3.656 & 1.619 & 0.963 & 0.783 & 3.188 & 1.540 \\
			\cline{2-21}
			&\multirow{4}*{\rotatebox{90}{ETTm1}}& 96  & 0.306 & 0.349 & \textbf{0.293} & \textbf{0.346} & 0.302 & 0.350 & 0.299 & {0.343} & 0.326 & 0.390 & 0.510 & 0.492 & 0.626 & 0.560 & 0.543 & 0.510 & 0.600 & 0.546 \\
            &\multicolumn{1}{c|}{}& 192 & \textbf{0.335} & \textbf{0.366} & \textbf{0.333} & 0.370 & 0.347 & 0.383 & 0.335 & \textbf{0.365} & 0.365 & 0.415 & 0.514 & 0.495 & 0.725 & 0.619 & 0.557 & 0.537 & 0.837 & 0.700 \\
            &\multicolumn{1}{c|}{}& 336 & \textbf{0.364} & \textbf{0.384} & {0.369} & {0.392} & 0.369 & 0.402 & {0.369} & 0.386 & 0.392 & 0.425 & 0.510 & 0.492 & 1.005 & 0.741 & 0.754 & 0.655 & 1.124 & 0.832 \\
            &\multicolumn{1}{c|}{}& 720 & \textbf{0.413} & \textbf{0.413} & 0.416 & 0.420 & 0.431 & 0.441 & 0.425 & {0.421} & 0.446 & 0.458 & 0.527 & 0.493 & 1.133 & 0.845 & 0.908 & 0.724 & 1.153 & 0.820 \\
			\cline{2-21}
			&\multirow{4}*{\rotatebox{90}{ETTm2}} & 96  & \textbf{0.161} & \textbf{0.251} & {0.166} & {0.256} & 0.176 & 0.255 & 0.167 & 0.260 & 0.180 & 0.271 & 0.205 & 0.293 & 0.355 & 0.462& 0.435 & 0.507 & 0.768 & 0.642 \\
            &\multicolumn{1}{c|}{}& 192  & \textbf{0.215} & \textbf{0.289} & {0.223} & {0.296} & 0.245 & 0.305 & 0.224 & 0.303 & 0.252 & 0.318 & 0.278 & 0.336 & 0.595 & 0.586 & 0.730 & 0.673 & 0.989 & 0.757 \\
            &\multicolumn{1}{c|}{}& 336 & \textbf{0.267} & \textbf{0.326} & 0.274 & 0.329 & 0.295 & 0.346 & 0.281 & 0.342 & 0.324 & 0.364 & 0.343 & 0.379 & 1.270 & 0.871 & 1.201 & 0.845 & 1.334 & 0.872 \\
            &\multicolumn{1}{c|}{}& 720  & \textbf{0.352} & \textbf{0.383} & 0.362 & \textbf{0.385} & 0.401 & 0.413 & 0.397 & 0.421 & 0.410 & 0.420 & 0.414 & 0.419 & 3.001 & 1.267 & 3.625 & 1.451 & 3.048 & 1.328 \\
			\cline{2-21}
		\end{tabular}
	}
	\caption[]{Multivariate long-term forecasting results with our model. $T\in \{96, 192, 336, 720\}$ for all datasets. The best results including the ones that cannot be statistically distinguished from the best mean numbers are in \textbf{bold}. We calculate standard error intervals for our method over 5 runs. The rest of the numbers are taken from the results from~\citep{nie2022time}\footnotemark. All metrics are reported on standard normalized datasets. \rev{We provide the standard errors for our method in Table~\ref{tab:conf} in Appendix~\ref{app:moreexp}.}}
	\label{tab:supervised}
\end{table*}

\footnotetext{Note that there was a bug in the original \href{https://github.com/yuqinie98/PatchTST/issues/7}{test dataloader} that affected the result significantly in smaller datasets like ETTh1 and ETTh2. We report the PatchTST results after correcting this bug in the dataloader.}

{\bf Baselines and Setup.}
We choose SOTA Transformers based models  for time-series including Fedformer~\citep{zhou2022fedformer}, Autoformer~\citep{wu2021autoformer}, Informer~\citep{zhou2021informer}, Pyraformer~\citep{liu2021pyraformer} and LongTrans~\citep{li2019enhancing}. Recently, DLinear~\citep{zeng2022transformers} showed that simple linear models can outperform the above methods and therefore DLinear serves as an important baseline. We include N-HiTS~\citep{challu2022nhits} which is an improvement over the famous NBeats~\citep{oreshkinn} model.
Finally we compare with PatchTST ~\citep{nie2022time} where they showed that vanilla Transformers applied to time-series patches can be very effective. The results for all Transformer based baselines are reported from~\citep{nie2022time}. 

For each method, the look-back window was tuned in $\{24, 48, 96, 192, 336, 720\}$. We report the DLinear numbers directly from the original paper~\citep{zeng2022transformers}. For our method we always use context length of $720$ for all horizon lengths in $\{96, 192, 336, 720\}$. All models were trained using MSE as the training loss. In all the datasets, the train:validation:test ratio is 7:1:2 as dictated by prior work. Note that all the experiments are performed on standard normalized datasets (using the mean and the standard deviations in the training period) in order to be consitent with prior work~\citep{wu2021autoformer}.

{\bf Our Model.} We use the  architecture described in Figure~\ref{fig:arch}. We tune our hyper-parameters using the validation set rolling validation error. We provide details about our hyper-parameters in Appendix~\ref{app:params}. As global dynamic covariates, we use simple time-derived features like minute of the hour, hour of the day, day of the week etc which are normalized similar to~\citep{alexandrov2020gluonts}. Note that these features were turned off in DLinear since it was observed to hurt the performance of the linear model, however our model can easily handle such features. Our model is trained in Tensorflow~\citep{abadi2016tensorflow} and we optimize using the default settings of the Adam optimizer~\citep{kingma2014adam}. We provide our implementation in the supplementary with scripts to reproduce the results in Table~\ref{tab:supervised}.

{\bf Results. } We present Mean Squared Error (MSE) and Mean Absolute Error (MSE) for all datasets and methods in Table~\ref{tab:supervised}. For our model we report the mean metric out of 5 independent runs for each setting. The bold-faced numbers are from the best model or within statistical significance of the best model in terms of two standard error intervals. Note that different predictive statistics are optimal for different target metrics~\citep{awasthibenefits, gneiting2011making} and therefore we should look at a target metric that is closely aligned with the training loss in this case. Since all models were trained using MSE let us focus on that column for comparisons. 

We can see that \ours, PatchTST, N-HiTS and DLinear are much better than the other baselines in all datasets. \rev{This can be attributed to the fact that sub-quadratic approximations to the full self-attention mechanism is perhaps not best suited for long term forecasting. The same was observed in the PatchTST~\citep{nie2022time} where it was shown that full self attention over patches was much more effective even when applied in a channel dependent manner. For a more in depth discussion about the pitfalls of sub-quadratic attention approximation in the context of forecasting, we refer the reader to Section 3 of~\citep{zeng2022transformers}.} In Appendix~\ref{sec:theory}, we prove that a linear analogue of our model can be optimal for predicting linear dynamical systems when compared against sequence models, thus shedding some light on why our model and even simpler models like DLinear can be so competitive for long context and/or horizon forecasting.

Further, we outperform DLinear  significantly  in all settings except for horizon 192 in ETTh1 where the performances are equal. This shows the value of the additional non-linearity in our model. In some datasets like Weather and ETTh1, N-HiTS performs similar to TiDE and PatchTST for horizon 96, but fails to uphold the performance for longer horizons.
In all datasets except Weather, we either outperform PatchTST or perform within its statistical significance for most horizons. In the Weather dataset, PatchTST peforms the best for horizons 96-336 while our model is the most performant for horizon 720. In the biggest dataset (Traffic), we significantly outperform PatchTST in all settings. For instance, for horizon 720 our prediction is 10.6\% better than PatchTST in  MSE. \rev{We provide additional results in Appendix~\ref{app:exp_new} including a comparison with the S4 model~\citep{guefficiently} in Table~\ref{tab:new}.}

\subsection{Demand Forecasting}
\label{sec:m5}
\rev{
In order to showcase our model's ability to handle static attrubutes and complex dynamic covariates we use the M5 forecasting competition benchmarks~\citep{makridakis2022m5}. We follow the convention in the example notebook\footnote{\url{https://github.com/awslabs/gluonts/blob/dev/examples/m5_gluonts_template.ipynb}} released by the authors of~\citep{alexandrov2020gluonts}. The dataset consists of more than 30k time-series with static attributes like hierarchical categories and dynamic covariates like promotions. More details of the setup are available in Appendix~\ref{app:exp_new}.

 We present the competition metric (WRMSSE) results on the test set corresponding to the private leader-board in Table~\ref{tab:m5}. We compare with DeepAR~\citep{salinas2020deepar} whose implementation can handle all covariates and also PatchTST (the best model from Table~\ref{tab:supervised}). Note that the implementation of PatchTST~\citep{nie2022time} does not handle covariates. We report the score over 3 independent runs along with the corresponding standard errors. We can see that PatchTST performs poorly as it does not use covariates. Our model using all the covariates outperforms DeepAR (that also uses all the covariates) by as much as 20\%. For the sake of ablation, we also provide the metric for our model that uses only date derived features as covariates. There is a degradation in performance from not using the dataset specific covariates but even so this version of the model also outperforms the other baselines.

\begin{table}[!ht]
    \centering
    \begin{tabular}{l|c|c}
\toprule 
Model & Covariates & Test WRMSSE \\ 
\midrule 
TiDE & Static + Dynamic & $\mathbf{0.611 \pm 0.009}$ \\ 
TiDE & Date only & $0.637 \pm 0.005$ \\ 
DeepAR & Static + Dynamic & $0.789 \pm 0.025$ \\ 
PatchTST & None & $0.976 \pm 0.014$ \\ 
\bottomrule
\end{tabular}
    \caption{\rev{M5 forecasting results on the private test set. We report the competition metric (averaged across three runs) for each model. We also list the covariates used by all models.}}
    \label{tab:m5}
\end{table}

}
\subsection{Training and Inference Efficiency} 

\begin{figure}[htbp]
  \centering
  \begin{subfigure}[b]{0.45\textwidth}
    \includegraphics[width=\textwidth]{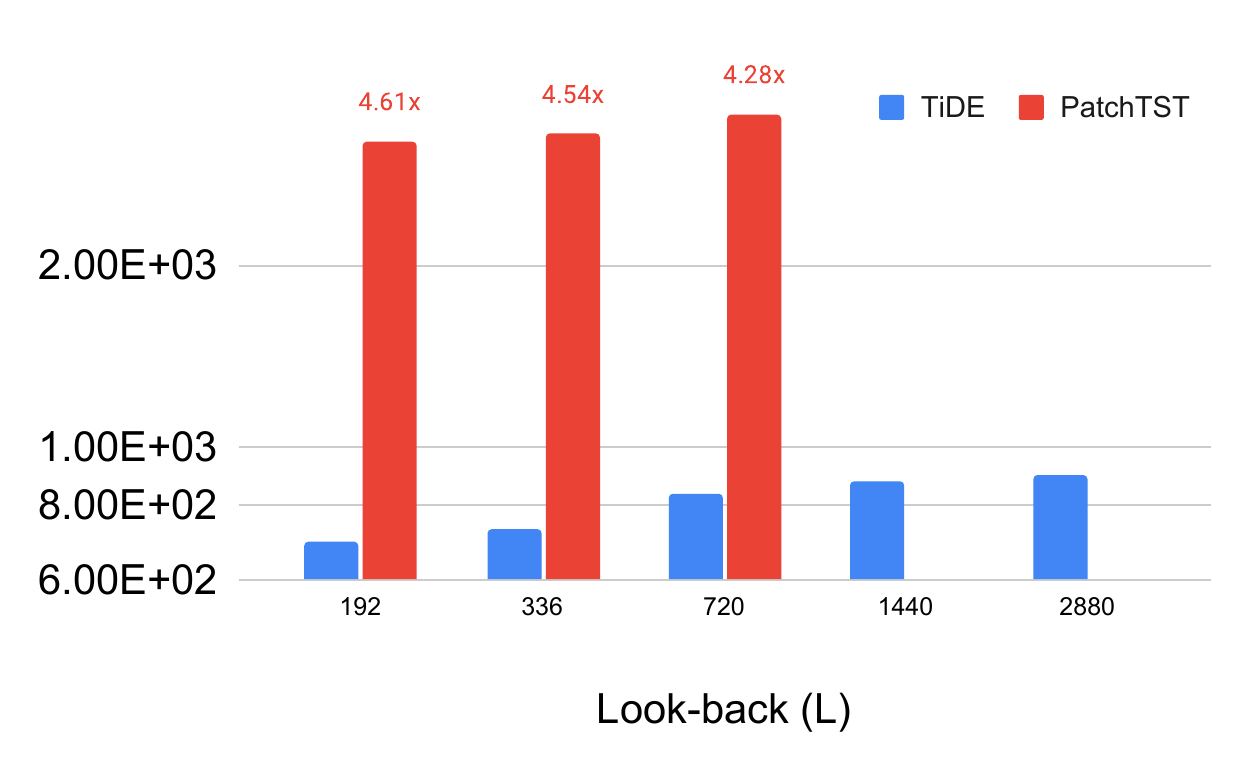}
    \caption{Inference time per batch in microseconds} \label{fig:inf}
  \end{subfigure}
  \hfill
  \begin{subfigure}[b]{0.45\textwidth}
    \includegraphics[width=\textwidth]{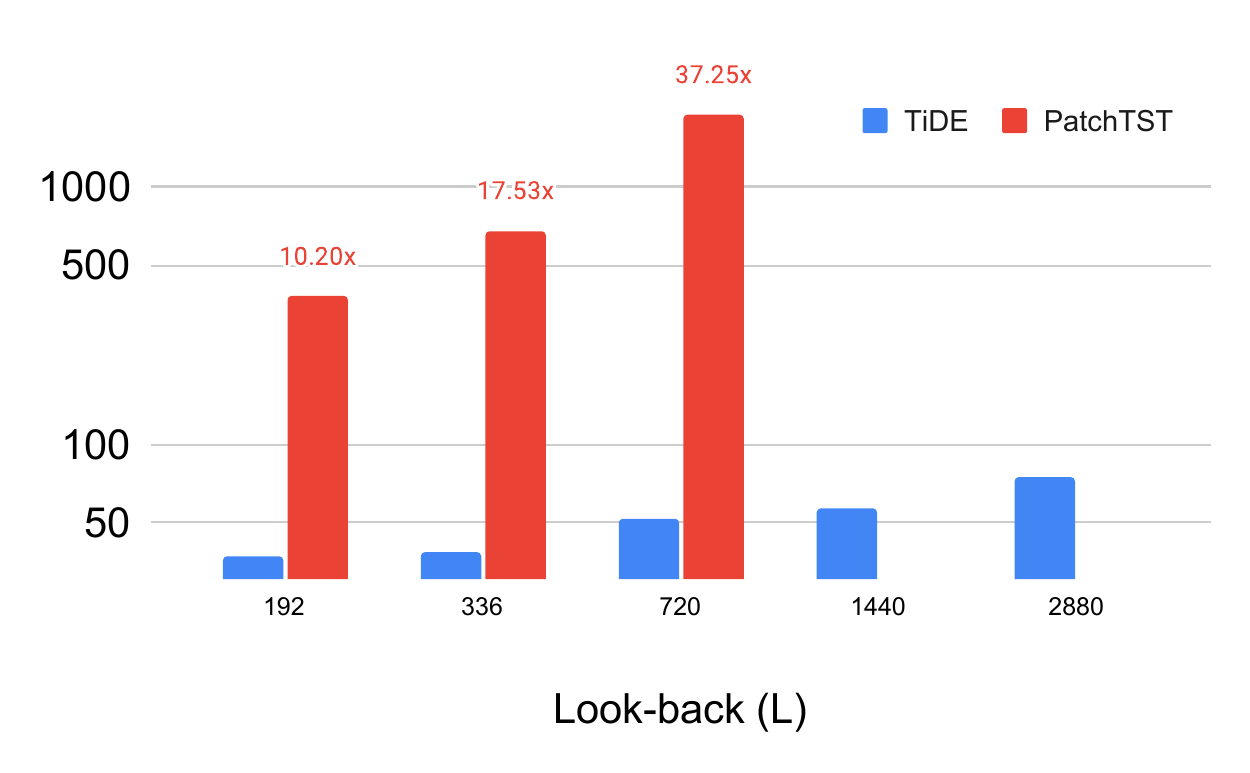}
    \caption{Training time for one epoch in seconds.}\label{tab:training}
  \end{subfigure}
  \caption{In (a) we show the inference time per batch on the electricity dataset. In (b) we show the corresponding training times for one epoch. In both the figures the y-axis is plotted in log-scale. Note that the PatchTST model ran out of GPU memory for look-back $L \geq 1440$.}
  \label{fig:timings}
\end{figure}

In the previous section we have seen that \ours~outperforms all methods except PatchTST by a large margin while it performs better or comparable to PatchTST in all datasets except Weather. Next, we would like to demonstrate that \ours~is much more efficient than PatchTST in terms of both training and inference times.

Firstly, we would like to note that inference scales as $\tilde{O}(n_eh^2 + hL)$ for the encoder in \ours, where $n_e$ is the number of layers in the encoder, $h$ is the hidden size of the internal layers and $L$ is the look-back. On the other hand, inference in PatchTST encoder would scale as $\tilde{O}(K n_aL^2/P^2)$, where $K$ is the size of the key in self-attention, $P$ is the patch-size and $n_a$ is the number of attention layers. The quadratic scaling in $L$ can be prohibitive for very long contexts. Also, the amount of memory required is quadratic in $L$ for the vanilla Transformer architecture used in PatchTST~\footnote{Note that sub-quadratic memory attention mechanism does exist~\citep{dao2022flashattention} but has not been used in PatchTST. The quadratic computation seems to be unavoidable since approximations like Pyraformer seem to perform much worse than PatchTST.}.

We demonstrate these effects in practice in Figure~\ref{fig:timings}. 
For the comparison to be fair we carry out the experiment using the data loader in the Autoformer~\citep{wu2021autoformer} code base that was used in all subsequent papers. We use the electricity dataset with batch size $8$, that is each batch has a shape of $8 \times 321 \times L$ because the electricity dataset has $321$ time-series. We report the inference time for one batch and the training time for one epoch for \ours~and PatchTST as the look-back ($L$) is increased from $192$ to $2880$. We can see that there is an order of magnitude difference in inference time. The differences in training time is even more stark with PatchTST being much more sensitive to the look-back size. Further PatchTST runs out of memory for $L \geq 1440$. Thus our model achieves better or similar accuracy while being much more computation and memory efficient. All the experiments in this section were performed using a single NVIDIA T4 GPU on the same machine with 64 core Intel(R) Xeon(R) CPU @ 2.30GHz.

\subsection{Ablation Study}
\label{sec:ablation}
\paragraph{Temporal Decoder.} The use of the temporal decoder for adaptation to future covariates is perhaps one of the most interesting components of our model. Therefore in this section we would like to show the usefulness of that component with a semi-synthetic example using the electricity dataset.

\begin{figure}[h]
    \centering
    \includegraphics[width=0.8\linewidth]{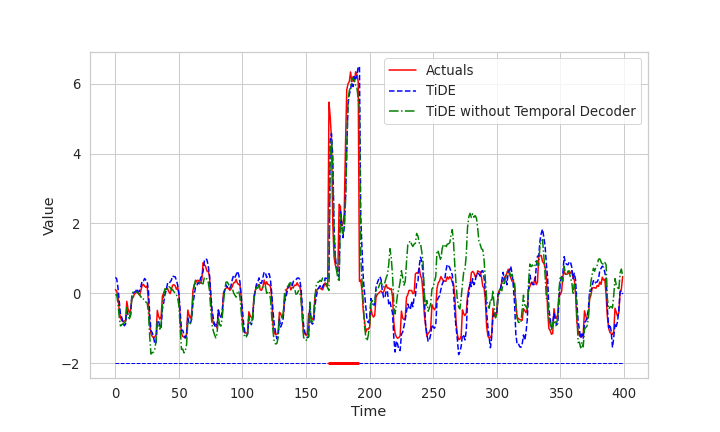}
    \caption{We plot the actuals vs the predictions from TiDE with and without the temporal decoder after just one epoch of training on the modified electricity dataset. The red part of the horizontal line indicates an event of Type A occuring.}
    \label{fig:ablation}
\end{figure}

We derive a new dataset from the electricity dataset, where we add numerical features for two kinds of events. When an event of Type A occurs the value of a time-series is increased by a factor which is uniformly chosen between $[3, 3.2]$.  When an event of Type B occurs the value of a time-series is decreased by a factor which is uniformly chosen between $[2, 2.2]$. Only 80\% of the time-series are affected by these events and the time-series id's that fall in this bracket are chosen randomly. There are 4 numerical covariates that indicate Type A and 4 that indicate Type B. When Type A event occurs the Type A covariates are drawn from an isotropic Gaussian with mean $[1.0, 2.0, 2.0, 1.0]$ and variance $0.1$ for every coordinate. On the other hand in the absence of Type A events Type A covariates are drawn from an isotropic Gaussian with mean $[0.0, 0.0, 0.0, 0.0]$. Thus these covariates serve as noisy indicators of the event. We follow a similar pattern for Type B events and covariates but with different means. Whenever these events occur they occur for 24 contiguous hours.

In order to showcase that the use of the temporal decoder can learn such patterns derived from the covariates faster, we plot the predictions from the TiDE model with and without the temporal decoder after just one epoch of training on the modified electricity dataset in Figure~\ref{fig:ablation}. The red part of the horizontal line indicates the occurrence of Type A events. We can see that the use of temporal decoder has a slight advantage during that time-period. But more importantly, in the time instances following the event the model without the temporal decoder is thrown off possibly because it has not yet readjusted its past to what it should have been without the event. This effect is negligible in the model which uses the temporal decoder, even after just one epoch of training.

\begin{figure}[h]
\begin{minipage}{0.5\textwidth}
\centering
\includegraphics[width=\linewidth]{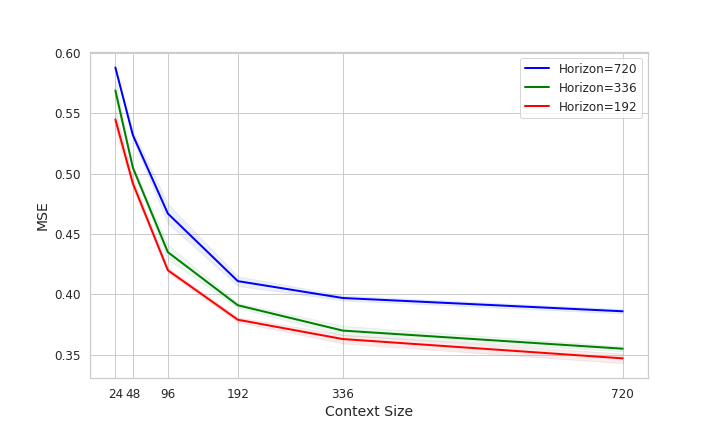}
\caption{\rev{We plot the Test MSE on the traffic dataset as a function of different context sizes for three different horizon length tasks. Each plot is an average of 5 runs with the 2 standard error interval plotted.}}
\label{fig:context}
\end{minipage}
\begin{minipage}{0.5\textwidth}
\centering
\resizebox{\textwidth}{!}{%
		\begin{tabular}{cc|c|cc|cc}
			\cline{2-7}
			&\multicolumn{2}{c|}{Models}& \multicolumn{2}{c|}{\ours} & \multicolumn{2}{c}{TiDE (no res.)} \\
			\cline{2-7}
			&\multirow{4}*{Electricity} & 96 & $\mathbf{0.132 \pm 0.003}$ &	$\mathbf{0.229 \pm 0.001}$ & $0.136 \pm 0.001$ & $0.235 \pm 0.002$ \\
            &\multicolumn{1}{c|}{}& 192 & $\mathbf{0.147 \pm 0.003}$ & $\mathbf{0.243 \pm 0.001}$ & $0.153 \pm 0.001$ & $0.253 \pm 0.001$  \\
            &\multicolumn{1}{c|}{} & 336 & $\mathbf{0.161 \pm 0.001}$ & $\mathbf{0.261 \pm 0.002}$ & $0.172 \pm 0.003$ & $0.274 \pm 0.002$ \\
            &\multicolumn{1}{c|}{}& 720 & $0.196 \pm 0.002$	& $0.294 \pm 0.001$ & $0.196 \pm 0.003$ & $0.295 \pm 0.002$  \\
			\cline{2-7}
		\end{tabular}%
}
\caption{\rev{We perform an ablation study by presenting results from our model without any residual connections, on the electricity benchmark. We average over 5 runs for all the numbers and present the corresponding standard errors.}}
\label{tab:nores}
\end{minipage}
\end{figure}

\rev{
\paragraph{Context Size.} In Figure~\ref{fig:context} we study the dependence of prediction accuracy with context size on the Traffic dataset. We plot the results for multiple horizon length tasks and show that in all cases our methods performance becomes better with increasing context size as expected. This is contrast to some of the transformer based methods like Fedformer, Informer as shown in ~\citep{zeng2022transformers}.

\paragraph{Residual Connections.} In Table~\ref{tab:nores} we perform an ablation study of the residual connections on the electricity dataset. In the model dubbed TiDE (no res) we remove all the residual connections including the ones in the residual block as well as the global linear residual connection. In horizons 96-336 we see a statistically significant drop in performance without the residual connections.
}

\section{Conclusion}
\label{sec:conclusion}
We propose a simple MLP based encoder decoder model that matches or supersedes the performance of prior neural network baselines on popular long-term forecasting benchmarks. At the same time, our model is 5-10x faster than the best Transformer based baselines. Our study shows that self attention might not be necessary to learn the periodicity and trend patterns at least for these long-term forecasting benchmarks. 

Our theoretical analysis partly explains why this could be the case by proving that linear models can achieve near optimal rate when the ground truth is generated from a linear dynamical system. However, for future work it would be interesting to rigorously analyze MLPs and Transformer \rev{(including non-linearity)} under some simple mathematical model for time-series data and potentially quantify the (dis)advantages of these architectures for different levels of seasonality and trends. \rev{Also, note that transformers are generally more parameter efficient than MLPs while being much more memory and compute intensive. This could be a limitation while training extremely large scale pre-trained models but the benefits of that line of work are not immediately clear in time-series forecasting and is beyond the scope of this work.}

\FloatBarrier
\bibliographystyle{plainnat}
\bibliography{tmlr/references, tmlr/rose-ref}

\clearpage
\appendix

\section{Theoretical Analysis under Linear Dynamical Systems}
\label{sec:theory}
To gain insights into our design, we will now analyze the simplest linear analogue of our model. In our model if all the residual connections are active and the size of the encoding is greater than or equal to the length of the horizon, then it reduces to a linear map from the context and the covariates to the horizon. We study this version for the case when the data is generated from a Linear Dynamical System (LDS), that has been a popular mathematical model for systems evolving with time~\citep{kalman1963mathematical}. We will prove that under some conditions, a linear model that maps the past and the covariates of a finite context to the future can be optimal for prediction in a LDS.

\subsection{Theoretical Results}
\label{sec:theoryp1}
We formally define a linear dynamical system (LDS) as follows,
\begin{definition}
A linear dynamical system (LDS) is a map from a sequence of input vectors $x_1, \ldots, x_T \in \reals^n$ to output (response) vectors $y_1, \ldots, y_T \in \reals^m$ of the form
\begin{align}
h_{t+1} &= A h_t + B x_t + \eta_t \\
y_t &= C h_t + D x_t + \xi_t,
\end{align}
where $h_0, \ldots, h_T \in \reals^d$ is a sequence of hidden states, $A,B,C,D$ are matrices of appropriate dimension, and $\eta_t \in \reals^d, \xi_t \in \reals^m$ are (possibly stochastic) noise vectors. The $x_t$'s can be thought of as covariates for the time-series $y_t$.
\end{definition}
Given an LDS with parameter $\Theta = (A,B,C,D,h_0 = 0)$, we define the LDS predictor as follows,
\begin{definition}[LDS predictor]
\begin{align}
\hat y_t &= y_{t-1} + (CB + D)x_t - Dx_{t-1} + \sum_{i=1}^{t-1} C(A^i - A^{i-1})Bx_{t-i}
\end{align}
\end{definition}

For a fixed sequence length $T$, we consider a roll-out of the system $\{ (x_t, y_t) \}_{t=1}^T$ to be a single example. In particular, we define $(X = (x_1, y_1, \ldots, x_{T-1}, y_{T-1}, x_T), Y = y_T)$ where $X$ contains be the full information that is available for the model to predict observation $y_T$. Trained on $N$ i.i.d. samples $\{(X_i, Y_i)\}$, the goal of the model is to predict $Y_i$ from $X_i$. 

We assume the samples satisfy $\norm{x_t}_2, \norm{y_{t}}_2 \leq c$ for a constant $c$. We compete against the class of functions in $\Hcal$ restricted to contain LDSs with parameters $\Theta = (A,B,C,D,h_0 = 0)$ such that $0 \preccurlyeq A \preccurlyeq \gamma \cdot I$ where $\gamma<1$ is a constant and $\norm{B}_F, \norm{C}_F, \norm{D}_F\le c$ for an absolute constant $c$. For error metrics, we consider squared loss function $\ell_{X,Y} = \norm{h(x_1, y_1, \ldots, x_{T-1}, y_{T-1}, x_T)-y_T}^2$. For an empirical sample set $S$, let $\ell_S(h) = \frac{1}{|S|} \sum_{(X,Y) \in S} \ell_{X,Y}(h)$. Similarly, for a distribution $\Dcal$, let $\ell_\Dcal(h) = \E_{(X,Y) \sim \Dcal} [\ell_{X,Y}(h)]$.

Now we define our auto-regressive hypothesis class. Let $\tilde X\in \reals^{(k+1)n+m}$ be the concatenated vector
\[\bmat{ x_{t-k} & x_{t-k+1} & \cdots & x_{t-1} & \; x_t \; & \; y_{t-1} },\] and $f_M(\tilde X) = M \tilde X, M\in \reals^{m\times (k+1)n+m}$. Our hypothesis class is defined as $\hat\Hcal =  \{f_M | \norm{M}_F\le O(1)\}$. 

We are ready to state our main theorem.

\begin{proposition}[Generalization bound of learning LDS with auto-regressive algorithm]
\label{thm:main}
Choose any $\eps > 0$.
Let $S = \{ (X_i, Y_i) \}_{i=1}^N$ be a set of i.i.d. training samples from a distribution $\Dcal$. Let $\hat h \defeq \argmin_{h \in \hat\Hcal} \ell_S(h)$ with a choice of $k = \Theta(\log(1/\eps) )$.
Let $h^* \defeq \argmin_{h^* \in \Hcal} \ell_\Dcal(h)$ be the loss minimizer over the set of $LDS$ predictors. Then,
with probability at least $1 - \delta$, it holds that
\[ \ell_\Dcal( \hat h ) - \min_{h \in \Hcal} \ell_\Dcal( h )
\leq \eps + \frac{ O\pa{\log(1/ \eps)\sqrt{\log 1/\delta} } }{\sqrt{N}}.
\]
\end{proposition}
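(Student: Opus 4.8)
The plan is to follow the standard excess-risk decomposition for empirical risk minimization, splitting the bound into an \emph{approximation} term (how well the finite-memory linear class $\hat\Hcal$ can mimic the best LDS predictor in $\Hcal$) and an \emph{estimation} term (uniform convergence of $\ell_S$ to $\ell_\Dcal$ over $\hat\Hcal$). Write $h^\star \defeq \argmin_{h\in\Hcal}\ell_\Dcal(h)$, let $f_{M^\star}\in\hat\Hcal$ be the truncation of $h^\star$ constructed below, and set $\Delta \defeq \sup_{h\in\hat\Hcal}\abs{\ell_\Dcal(h)-\ell_S(h)}$. Then, since $\hat h$ minimizes the empirical loss, the chain $\ell_\Dcal(\hat h)\le \ell_S(\hat h)+\Delta \le \ell_S(f_{M^\star})+\Delta \le \ell_\Dcal(f_{M^\star})+2\Delta \le \ell_\Dcal(h^\star)+\eps+2\Delta$ reduces the theorem to two sub-claims: (i) an approximation bound $\ell_\Dcal(f_{M^\star})\le \ell_\Dcal(h^\star)+\eps$, and (ii) a high-probability bound $\Delta \le O(\log(1/\eps)\sqrt{\log(1/\delta)}/\sqrt N)$.

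First I would establish the approximation step, which is where the assumption $0\preccurlyeq A\preccurlyeq\gamma\cdot I$ with $\gamma<1$ is essential. Unrolling the hidden-state recursion with $h_0=0$ shows the LDS predictor is a \emph{linear} function of $(y_{t-1},x_t,x_{t-1},\dots,x_1)$ whose coefficient on $x_{t-i}$ is $C(A^i-A^{i-1})B$. Since $\norm{A^{i-1}}\le\gamma^{i-1}$ and $\norm{A-I}\le 1$ on the spectrum contained in $[0,\gamma]$, these coefficients obey $\norm{C(A^i-A^{i-1})B}\le \norm{C}_F\norm{B}_F\,\gamma^{i-1}$ and thus decay geometrically. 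Truncating the sum at $i=k$ defines a matrix $M^\star$ that keeps the $y_{t-1},x_t,x_{t-1},\dots,x_{t-k}$ coefficients of $h^\star$; its Frobenius norm is bounded by a convergent geometric series, so $\norm{M^\star}_F = O(1)$ and $f_{M^\star}\in\hat\Hcal$. The pointwise prediction gap between $h^\star$ and $f_{M^\star}$ is the tail $\sum_{i>k}C(A^i-A^{i-1})Bx_{t-i}$, of norm $O\pa{c\,\gamma^{k}/(1-\gamma)}$ using $\norm{x_t}_2\le c$. Because all predictions and targets are uniformly bounded, the squared loss is Lipschitz on this range, so the loss gap is $O(\gamma^k)$; choosing $k=\Theta(\log(1/\eps))$ makes it at most $\eps$.

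Second I would bound $\Delta$ by uniform convergence. The class $\hat\Hcal$ consists of linear maps with Frobenius-bounded $M$ acting on inputs $\tilde X\in\reals^{(k+1)n+m}$ of norm $\norm{\tilde X}_2\le c\sqrt{k+2}=O(\sqrt k)$. The empirical Rademacher complexity of a bounded-norm linear class scales as $O\pa{\norm{M}_F\cdot\max_i\norm{\tilde X_i}/\sqrt N}=O(\sqrt k/\sqrt N)$. Since predictions satisfy $\norm{f_M(\tilde X)}_2\le \norm{M}_F\norm{\tilde X}_2=O(\sqrt k)$ and targets are bounded, the squared loss $\ell_{X,Y}$ is bounded and Lipschitz in the prediction with constant $O(\sqrt k)$ on the relevant range, so the contraction (Talagrand) lemma gives a loss-class Rademacher complexity of $O(k/\sqrt N)$. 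Symmetrization together with a bounded-differences (McDiarmid) argument then yields $\Delta \le O(k/\sqrt N)+O\pa{\sqrt{\log(1/\delta)/N}}$ with probability $1-\delta$. Substituting $k=\Theta(\log(1/\eps))$ produces the claimed estimation term, and combining it with the approximation term $\eps$ completes the proof.

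The main obstacle I anticipate is the bookkeeping in the approximation step: verifying that the truncated coefficient matrix genuinely lies in $\hat\Hcal$ with an \emph{absolute} constant norm bound uniformly over all admissible $(A,B,C,D)$, and cleanly converting the geometric prediction-error tail into a loss bound. The latter requires an a priori uniform bound on the magnitude of every predictor's output, which itself rests on $\gamma<1$ together with the Frobenius bounds on $B,C,D$ and $\norm{x_t}_2,\norm{y_t}_2\le c$. A secondary subtlety is tracking how the $k$-dependence propagates through both the Lipschitz constant of the squared loss and the input norm, so that exactly one power of $\log(1/\eps)$ survives in the final rate; being loose here would introduce spurious $\sqrt k$ or $k^{3/2}$ factors and break the stated bound.
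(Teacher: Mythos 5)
Your proposal is correct and follows essentially the same route as the paper: an approximation step that truncates the unrolled LDS predictor at lag $k=\Theta(\log(1/\eps))$ using the geometric decay of the coefficients $C(A^i-A^{i-1})B$ under $0\preccurlyeq A\preccurlyeq\gamma I$, combined with a Rademacher-complexity generalization bound for the Frobenius-norm-bounded linear class. Your bookkeeping of the $\sqrt{k}$ factors in the input norm and the Lipschitz constant is, if anything, slightly more explicit than the paper's, and it lands on the same $O(\log(1/\eps)\sqrt{\log(1/\delta)}/\sqrt{N})$ estimation term.
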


The above result shows that the linear autoregressive predictor with a short look-back window is competitive against the best LDS predictor where the largest eigenvalue of the transition matrix $A$ is strictly smaller than $1$. In Appendix~\ref{sec:sims} we compare linear models with LSTM's and Transformers on long-term forecasting tasks on data generated by LDS, thus validating our theoretical results.

\begin{proof}[Proof of Proposition~\ref{thm:main}]
The proof proceeds as follows: First we show that an auto-regressive model with look by window length $\Omega(\log(1/\eps))$ can approximate an LDS with error $\eps$. Second, we prove a simple bound on the Rademacher complexity of the class of auto-regressive model we considered, which implies a generalization bound of our algorithm. Combining both results yields our main result. 
\begin{proposition}[Approximating LDS with auto-regressive model]
\label{thm:main-appx-relaxation}
Let $\hat{y}_T$ be the predictions made by an LDS $\Theta = (A,B,C,D,h_0 = 0)$. Then, for any $\eps > 0$, with a choice of $k = \Omega \rbr{\log (1/\eps)}$, there exists an $M_\Theta \in \RR^{m \times (k+1)n+m}$ such that
\[ \norm{M_{\Theta} \tilde X - y_T}^2 \leq \norm{\hat y_T - y_T}^2 + \eps. \]
\end{proposition}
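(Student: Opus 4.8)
The plan is to exhibit the approximating matrix $M_\Theta$ explicitly and then control the approximation error through the geometric decay of the powers of $A$. First I would rewrite the LDS predictor $\hat y_T$ as an infinite-horizon linear function of the quantities $y_{T-1}, x_T, x_{T-1}, \ldots, x_1$, reading off the coefficient blocks: the coefficient of $y_{T-1}$ is $I_m$, that of $x_T$ is $CB+D$, that of $x_{T-1}$ is $-D + C(A-I)B$, and that of $x_{T-i}$ for $i \ge 2$ is $C(A^i - A^{i-1})B = C A^{i-1}(A-I)B$. The autoregressive feature vector $\tilde X = [x_{T-k}; \cdots; x_{T-1}; x_T; y_{T-1}]$ grants the model access to exactly the blocks for $y_{T-1}, x_T, x_{T-1}, \ldots, x_{T-k}$, so I would define $M_\Theta$ to be the matrix whose blocks reproduce those coefficients verbatim. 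By construction $M_\Theta \tilde X$ then equals $\hat y_T$ minus the truncated tail $\sum_{i=k+1}^{T-1} C(A^i - A^{i-1})B\, x_{T-i}$.

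The next step is to bound this tail. Since $0 \preccurlyeq A \preccurlyeq \gamma I$, the matrix $A$ is symmetric with eigenvalues in $[0,\gamma]$, so $\norm{A^i - A^{i-1}}_{\mathrm{op}} = \norm{A^{i-1}(A-I)}_{\mathrm{op}} \le \gamma^{i-1}$ (the eigenvalues of $A^{i-1}(A-I)$ are $\lambda^{i-1}(\lambda-1)$, whose magnitude is at most $\gamma^{i-1}$). Combined with $\norm{B}_F, \norm{C}_F \le c$ and $\norm{x_t}_2 \le c$, the triangle inequality gives $\norm{M_\Theta \tilde X - \hat y_T} \le \sum_{i=k+1}^{\infty} c^3 \gamma^{i-1} = c^3 \gamma^k/(1-\gamma)$, which decays geometrically in $k$.

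Finally, I would convert this prediction-level bound into the claimed loss comparison. Writing $a = \hat y_T - y_T$ and $b = M_\Theta \tilde X - \hat y_T$, we have $\norm{M_\Theta \tilde X - y_T}^2 = \norm{a}^2 + 2\inner{a}{b} + \norm{b}^2$, so it suffices to show $2\inner{a}{b} + \norm{b}^2 \le \eps$. By Cauchy--Schwarz this is at most $2\norm{a}\,\norm{b} + \norm{b}^2$, and since $\norm{y_T}_2 \le c$ while the same geometric-series argument bounds $\norm{\hat y_T}$ by a constant depending only on $c$ and $\gamma$, the factor $\norm{a} = \norm{\hat y_T - y_T}$ is itself bounded by a constant. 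Thus the entire correction is $O(\gamma^k)$, and choosing $k = \Omega(\log(1/\eps))$, with the hidden constant proportional to $1/\log(1/\gamma)$, drives it below $\eps$.

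I expect the only genuine subtlety to be this last paragraph: because the statement compares squared losses rather than predictions, a naive triangle inequality does not suffice, and one must control the cross term $\inner{a}{b}$, which in turn forces an a priori bound on $\norm{\hat y_T - y_T}$. That bound is precisely where the assumptions $\norm{x_t}_2, \norm{y_t}_2 \le c$ and $\gamma < 1$ re-enter, the latter also being what makes the tail summable in the first place. I would additionally note that the same per-block estimates yield $\norm{M_\Theta}_F = O(1)$, so that $M_\Theta$ lies in the hypothesis class $\hat\Hcal$ as required by the downstream generalization argument in Proposition~\ref{thm:main}.
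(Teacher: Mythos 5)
Your proposal is correct and follows essentially the same route as the paper: the identical block construction of $M_\Theta$ from the LDS predictor's coefficients, the geometric tail bound $O(\gamma^k/(1-\gamma))$ from $0 \preccurlyeq A \preccurlyeq \gamma I$, and the expansion $\norm{a+b}^2 \le \norm{a}^2 + 2\norm{a}\norm{b} + \norm{b}^2$ to handle the cross term. You are somewhat more explicit than the paper about why $\norm{\hat y_T - y_T}$ is $O(1)$ (which the paper absorbs silently into its $O(\cdot)$) and about verifying $\norm{M_\Theta}_F = O(1)$, but these are elaborations of the same argument, not a different one.
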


\begin{proof}
This proposition is an analog of Theorem 3 in~\citep{hazan2017learning}. We construct $M_{\Theta}$ as the block matrix
\[\bmat{ M^{(k-1)} & M^{(k-2)} & \cdots & M^{(1)} & M^{(x')} & M^{(x)} & M^{(y)} },\]
where the blocks' dimensions are chosen to align with $\tilde X_t$, the concatenated vector
\[\bmat{ x_{t-k} & x_{t-k+1} & \cdots & x_{t-1} & \; x_t \; & \; y_{t-1} },\]
so that the prediction is the block matrix-vector product
\[ M_{\Theta} \tilde X_t = \sum_{j=1}^{k-1} M^{(j)} x_{t-1-j} + M^{(x')} x_{t-1} + M^{(x)} x_t + M^{(y)} y_{t-1}. \]

Our construction is as follows:
\begin{itemize}
\item $M^{(j)} = C(A^{j+1}-A^{j})B$, for each $1 \leq j \leq k-1$.
\item $M^{(x')} = C(A-I)B - D,\quad M^{(x)} = CB + D,\quad M^{(y)} = I_{m\times m}$.
\end{itemize}
The prediction of the LDS is by definition
\begin{align*}
\hat y_t &= y_{t-1} + (CB + D)x_t - Dx_{t-1} + \sum_{i=1}^{t-1} C(A^i - A^{i-1})Bx_{t-i}
\end{align*}
We conclude that
$$
\hat{y}_t = M_{\Theta} \tilde X_t + \sum_{i=k+2}^{T} C(A^i-A^{i-1})Bx_{t-i}.
$$
This implies 
\begin{align*}
&\norm{M_{\Theta} \tilde X_t - y_t}^2\\
\leq& \norm{\hat y_t - y_t}^2 + 2\norm{\hat y_t - y_t}\norm{\sum_{i=k+2}^{T} C(A^i-A^{i-1})Bx_{t-i}} +  \norm{\sum_{i=k+2}^{T} C(A^i-A^{i-1})Bx_{t-i}}^2\\
\le& \norm{\hat y_t - y_t}^2 + O(\frac{\gamma^{k+2}+\gamma^{2k+4}}{1-\gamma}) \le \norm{\hat y_t - y_t} + \eps
\end{align*}
\end{proof}
The empirical Rademacher complexity of $\hat \Hcal$ on $N$ samples, with this restriction that $\norm{M}=O(1)$, satisfies
\[ \Rcal_N(\hat \Hcal) \leq O\pa{ \frac{ 1}{\sqrt{N}} }. \]
It's easy to check that $\norm{M_{\Theta}}_F\le O\pa{\sqrt{\sum_{i=0}^k\gamma^i}} = O(1)$, which falls in the feasible set of our algorithm. The maximum loss $\ell_{\max}$ of the hypothesis in model class $\hat{\Hcal}$ is bounded by $O(k)$. The lipschitz constant of loss function in the matrix $M$ is 
$G_\mathrm{max} \le \norm{M \tilde X_t - y_t}_2 \cdot \norm{\tilde X_t}_2 \leq O(k)$

With all of these facts in hand, a standard Rademacher complexity-dependent generalization bound holds in the improper hypothesis class $\hat\Hcal$ (see, e.g. \citep{bartlett2002rademacher}):
\begin{lemma}[Generalization via Rademacher complexity]
\label{lem:rademacher}
With probability at least $1 - \delta$, it holds that
\begin{align*}
\ell_\Dcal( \hat h ) - \ell_\Dcal( \hat h^* ) &\leq
G_\mathrm{max} \Rcal_N(\hat\Hcal)
+ \ell_\mathrm{max} \sqrt{ \frac{8 \ln 2/\delta}{N} }
\end{align*}
\end{lemma}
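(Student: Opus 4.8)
The plan is to prove Lemma~\ref{lem:rademacher} via the classical symmetrization-and-concentration pipeline, specialized to the bounded, Lipschitz squared loss on $\hat\Hcal$. Write $\Gcal \defeq \{(X,Y)\mapsto \ell_{X,Y}(h) : h\in\hat\Hcal\}$ for the induced loss class. By the two facts established just above the lemma, every $g\in\Gcal$ takes values in $[0,\ell_{\max}]$ with $\ell_{\max}=O(k)$, and the map $M\mapsto\ell_{X,Y}(f_M)$ is $G_{\max}$-Lipschitz on the feasible set $\{\norm{M}_F=O(1)\}$ with $G_{\max}=O(k)$. These two quantities are the only problem-specific inputs; the rest of the argument is distribution-free.

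First I would control the one-sided uniform deviation $\Phi(S)\defeq\sup_{h\in\hat\Hcal}\rbr{\ell_\Dcal(h)-\ell_S(h)}$. Since replacing a single sample changes $\Phi$ by at most $\ell_{\max}/N$, McDiarmid's bounded-differences inequality gives $\Phi(S)\le \E[\Phi(S)] + \ell_{\max}\sqrt{\ln(2/\delta)/(2N)}$ with probability at least $1-\delta/2$. The standard symmetrization step (ghost sample plus Rademacher signs) bounds $\E[\Phi(S)]\le 2\,\E[\Rcal_N(\Gcal)]$, and the Ledoux--Talagrand contraction lemma, using the Lipschitz constant $G_{\max}$ of the squared loss on the bounded domain, yields $\Rcal_N(\Gcal)\le G_{\max}\Rcal_N(\hat\Hcal)$. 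Here I would emphasize that the bound $\Rcal_N(\hat\Hcal)=O(1/\sqrt N)$ holds \emph{deterministically} (it only uses $\norm{M}_F=O(1)$), so no concentration of the complexity term is needed. Together these bound the uniform gap by $G_{\max}\Rcal_N(\hat\Hcal)$ plus the confidence term, up to the constant absorbed into the statement.

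Next I would convert the uniform bound into the stated estimation-error bound for the ERM $\hat h$. I decompose $\ell_\Dcal(\hat h)-\ell_\Dcal(\hat h^*) = [\ell_\Dcal(\hat h)-\ell_S(\hat h)] + [\ell_S(\hat h)-\ell_S(\hat h^*)] + [\ell_S(\hat h^*)-\ell_\Dcal(\hat h^*)]$. The middle bracket is $\le 0$ by optimality of the ERM; the first bracket is bounded by $\Phi(S)$ from the previous step; and the third bracket, for the \emph{fixed} comparator $\hat h^*$ (which depends only on $\Dcal$, not on $S$), is bounded by Hoeffding's inequality by $\ell_{\max}\sqrt{\ln(2/\delta)/(2N)}$ with probability $1-\delta/2$. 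A union bound over the two events gives total failure probability $\delta$, and the two $\sqrt{\ln(2/\delta)/(2N)}$ terms combine into the displayed $\ell_{\max}\sqrt{8\ln(2/\delta)/N}$.

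The main obstacle, and the only place the LDS structure matters, is that the squared loss is a priori unbounded, so the whole argument is vacuous without the domain control. I would therefore make sure the assumptions $\norm{x_t}_2,\norm{y_t}_2\le c$ together with $\norm{M}_F=O(1)$ are invoked to certify finite $\ell_{\max}$ and $G_{\max}$, and track that both scale as $O(k)=O(\log(1/\eps))$. This scaling is exactly what propagates the $\log(1/\eps)/\sqrt N$ rate into Proposition~\ref{thm:main} once this lemma is combined with the approximation guarantee of Proposition~\ref{thm:main-appx-relaxation}. Everything else (McDiarmid, symmetrization, contraction) is textbook, so I would cite the standard form of the bound, e.g. \citep{bartlett2002rademacher}, rather than re-derive it.
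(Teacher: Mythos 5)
Your proposal is correct and is essentially the same argument the paper relies on: the paper does not prove this lemma itself but invokes it as the standard Rademacher-complexity generalization bound (citing Bartlett--Mendelson), and your McDiarmid--symmetrization--contraction derivation, together with the three-term decomposition using ERM optimality and Hoeffding for the fixed comparator $\hat h^*$, is exactly that standard proof with the problem-specific inputs $\ell_{\max}=O(k)$ and $G_{\max}=O(k)$ plugged in as the paper does just before stating the lemma. The only cosmetic difference is the factor of $2$ on the symmetrization term and the precise constant in the confidence term, both of which are absorbed into the $O(\cdot)$ in the final bound, so nothing is lost.
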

With the stated choice of $k$,
an upper bound for the RHS of Lemma~\ref{lem:rademacher} is
\[\frac{ O\pa{ \log(1 / \eps)\sqrt{\log 1/\delta} } }{\sqrt{N}}.\]
Combining this with the approximation result (Proposition~\ref{thm:main-appx-relaxation})
yields the theorem.
\end{proof}

\subsection{Experimental Results on Synthetic Datasets}
\label{sec:sims}
{\bf Dataset.} We evaluate several models on a synthetic dataset generated from a linear dynamical system. The transition matrix $A$ is a $30\times 30$ dimension Wishart random matrix normalized to have operator norm equals $0.95$. The noise $\eta_t$ in the state transition follows from a $30$-dimensional Gaussian distribution. The input at each time-step $x_t$ follows from a $5$-dimensional standard Gaussian distribution, which is observable to the model. Finally, We add seasonality of $6$ different periodicity by adding cosine signal as the input to the linear dynamical system, but hidden from the prediction model. We generate $4$ different time series which shares the same model parameter, input $x_t$ and seasonality input, with the only difference coming from the randomness in the state transition. 
We set look-back window to be length $320$, and horizon also to length $320$. For each time-series, we use the first $1640$ steps for training, next $740$ steps for validation, and the final $740$ steps for testing, which results in $4000$ examples for training, $400$ examples for validation, and $400$ examples for testing. 

{\bf Baselines and Setup.}
We evaluate three models on our synthetic dataset: linear, long short-term memory (LSTM) and Transformer. Our linear model is a direct linear map between the history in look-back window and future. We use a one layer LSTM with dimension 128. For Transformer, we use a two layer self-attention layers with dimension 128, combined with a one hidden layer feed-forward network with 128 hidden units.

{\bf Results.} We present Mean Squared Error (MSE) for all models in Table~\ref{tab:linear_result}. For all models, we report the mean and standard deviation out of 3 independent runs for each setting. The bold-faced numbers are from the best model or within statistical significance of the best model in terms of two standard error intervals. We also plot the actuals (ground truth) vs the predictions from Linear, LSTM and Transformer models. We see that Transformer captures the lower frequency seasonality of the time-series but seems to not be able to leverage the inputs/covariates to predict the short term variation of the value, LSTM seems to not capture the trend/seasonality correctly, while Linear model's prediction is the closest to the truth, which matches the metric result in Table~\ref{tab:linear_result}.
\begin{table}[ht!]
\centering
\begin{tabular}{l|c}
\toprule
Model & MSE \\
\midrule
Linear & $\mathbf{0.510\pm0.001}$\\
LSTM & $1.455\pm0.455$  \\
Transformer & $0.731\pm0.041$  \\
\bottomrule
\end{tabular}
\caption{Mean Squared Error (MSE) of Linear, LSTM and Transformer models on synthetic time-series.}
\label{tab:linear_result}
\end{table}

\begin{figure}[h]
    \centering
    \includegraphics[width=0.8\linewidth]{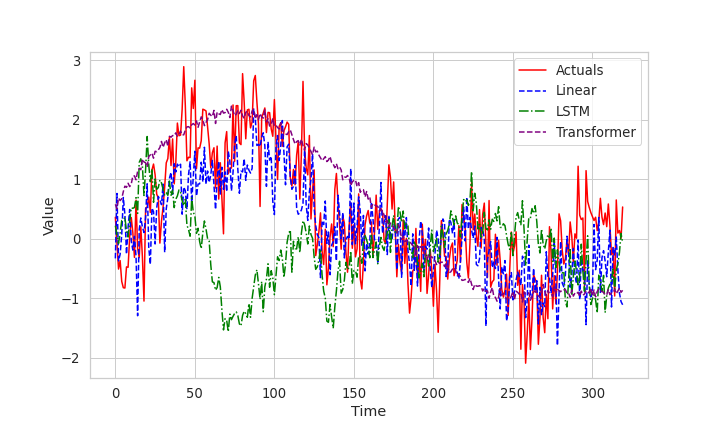}
    \caption{We plot the actuals vs the predictions from Linear, LSTM and Transformer models.}
    \label{fig:linear_pred_sample}
\end{figure}

\section{More Experimental Details}
\label{app:moreexp}
\subsection{Additional Experiments}
\rev{
\paragraph{Comparison against S4.} In Table~\ref{tab:new} we present the results of our model along side that of the S4 model~\citep{guefficiently}. The numbers of S4 are directly taken from Table-14 of the original paper. It can be seen that TiDE vastly outperforms the S4 model on the time-series benchmarks.

\paragraph{M5 Forecasting.} We will now provide more details about the M5 forecasting experiments. We follow the setup used in the notebook linked in Section~\ref{sec:exp} that was released by the authors of~\citep{alexandrov2020gluonts}. The list of dynamic features include date derived features, promotion features like snap\textunderscore CA, snap\textunderscore TX, snap\textunderscore WI, even\textunderscore type\textunderscore 1 and event\textunderscore type\textunderscore 2. It also includes static attributes like category\textunderscore id, store\textunderscore id, department\textunderscore id and item\textunderscore id. The categorical features are embedded into learnable embeddings. 

DeepAR~\citep{salinas2020deepar} has suggested using the zero-inflated negative binomial loss likelihood as the loss function for sparse count data in the dataset. Therefore we use this loss function for our model and the DeepAR model. All models are trained to a maximum of 100 epochs with an early stopping patience of 5.
}

\begin{table*}
\centering
		\begin{tabular}{cc|c|cc}
			\cline{2-5}
			&\multicolumn{2}{c|}{Models}& \multicolumn{2}{c}{\ours} \\
			\cline{2-5}
			&\multirow{4}*{Electricity} & 96 & ${0.132 \pm 0.003}$ &	${0.229 \pm 0.001}$  \\
            &\multicolumn{1}{c|}{}& 192 & ${0.147 \pm 0.003}$ & ${0.243 \pm 0.001}$  \\
            &\multicolumn{1}{c|}{} & 336 & ${0.161 \pm 0.001}$ & ${0.261 \pm 0.002}$ \\
            &\multicolumn{1}{c|}{}& 720 & $0.196 \pm 0.002$	& $0.294 \pm 0.001$ \\
			\cline{2-5}
			&\multirow{4}*{Traffic} & 96 & ${0.336 \pm 0.001}$ &	${0.253 \pm 0.001}$  \\
            &\multicolumn{1}{c|}{}& 192 & ${0.346 \pm 0.001}$ & ${0.257 \pm 0.002}$  \\
            &\multicolumn{1}{c|}{} & 336 & ${0.355 \pm 0.001}$ & ${0.260 \pm 0.001}$ \\
            &\multicolumn{1}{c|}{}& 720 & $0.386 \pm 0.002$	& $0.273 \pm 0.0005$ \\
			\cline{2-5}
			&\multirow{4}*{Weather} & 96 & ${0.166 \pm 0.0005}$ &	${0.222 \pm 0.0005}$  \\
            &\multicolumn{1}{c|}{}& 192 & ${0.209 \pm 0.002}$ & ${0.263 \pm 0.0001}$  \\
            &\multicolumn{1}{c|}{} & 336 & ${0.254 \pm 0.002}$ & ${0.301 \pm 0.0001}$ \\
            &\multicolumn{1}{c|}{}& 720 & $0.313 \pm 0.001$	& $0.340 \pm 0.0002$ \\
			\cline{2-5}
			&\multirow{4}*{ETTm2} & 96 & ${0.161 \pm 0.0002}$ &	${0.251 \pm 0.0003}$  \\
            &\multicolumn{1}{c|}{}& 192 & ${0.215 \pm 0.0001}$ & ${0.289 \pm 0.0004}$  \\
            &\multicolumn{1}{c|}{} & 336 & ${0.267 \pm 0.0001}$ & ${0.326 \pm 0.0002}$ \\
            &\multicolumn{1}{c|}{}& 720 & $0.352 \pm 0.0002$	& $0.383 \pm 0.0002$ \\
			\cline{2-5}
			&\multirow{4}*{ETTm1} & 96 & ${0.306 \pm 0.0001}$ &	${0.349 \pm 0.0002}$  \\
            &\multicolumn{1}{c|}{}& 192 & ${0.335 \pm 0.0002}$ & ${0.366 \pm 0.0002}$  \\
            &\multicolumn{1}{c|}{} & 336 & ${0.364 \pm 0.0004}$ & ${0.384 \pm 0.0001}$ \\
            &\multicolumn{1}{c|}{}& 720 & $0.413 \pm 0.0001$	& $0.413 \pm 0.0001$ \\
            \cline{2-5}
            &\multirow{4}*{ETTh1} & 96 & ${0.375 \pm 0.0003}$ &	${0.398 \pm 0.0002}$  \\
            &\multicolumn{1}{c|}{}& 192 & ${0.412 \pm 0.0002}$ & ${0.422 \pm 0.0001}$  \\
            &\multicolumn{1}{c|}{} & 336 & ${0.435 \pm 0.0001}$ & ${0.433 \pm 0.0001}$ \\
            &\multicolumn{1}{c|}{}& 720 & $0.454 \pm 0.0003$	& $0.465 \pm 0.0001$ \\
            \cline{2-5}
            &\multirow{4}*{ETTh2} & 96 & ${0.270 \pm 0.0005}$ &	${0.336 \pm 0.0007}$  \\
            &\multicolumn{1}{c|}{}& 192 & ${0.332 \pm 0.001}$ & ${0.380 \pm 0.002}$  \\
            &\multicolumn{1}{c|}{} & 336 & ${0.360 \pm 0.001}$ & ${0.407 \pm 0.001}$ \\
            &\multicolumn{1}{c|}{}& 720 & $0.419 \pm 0.005$	& $0.451 \pm 0.002$ \\
            \cline{2-5}
		\end{tabular}
\caption{\rev{We provide standard error bars for our method over 5 independent runs.}}
\label{tab:conf}
\end{table*}

\label{app:exp_new}
\begin{table*}[t]
	\centering
	
		\begin{tabular}{cc|c|cc|cc}
			\cline{2-7}
			&\multicolumn{2}{c|}{Models}& \multicolumn{2}{c|}{\ours} & \multicolumn{2}{c}{S4} \\
			\cline{2-7}
			&\multicolumn{2}{c|}{Metric}&MSE&MAE&MSE&MAE\\
			\cline{2-7}
			&\multirow{2}*{Weather} & 336 & 0.254 & 0.301 & 0.531 & 0.539 \\
            &\multicolumn{1}{c|}{}& 720 & 0.313 & 0.340 & 0.578 & 0.578  \\
			\cline{2-7}
			&\multirow{2}*{ETTh1} & 336 & 0.435 & 0.433 & 1.407 & 0.910 \\
            &\multicolumn{1}{c|}{}& 720 & 0.454 & 0.465 & 1.162 & 0.842  \\
			\cline{2-7}
			&\multirow{2}*{ETTh2} & 336 & 0.360 & 0.407 & 0.531 & 0.539 \\
            &\multicolumn{1}{c|}{}& 720 & 0.419 & 0.451 & 2.650 & 1.340  \\
			\cline{2-7}
			&\multirow{2}*{Electricity} & 336 & 0.161 & 0.261 & 0.531 & 0.539 \\
            &\multicolumn{1}{c|}{}& 720 & 0.196 & 0.294 & 0.578 & 0.578  \\
			\cline{2-7}
		\end{tabular}
	\caption[]{\rev{We benchmark our model's performance against that of S4. The S4 results are taken from Table~14 of the original paper~\citep{guefficiently}.}}
	\label{tab:new}
\end{table*}

\subsection{Data Loader}
\label{app:data}
Each training batch consists of a look-back $\bY[\cB, t-L:t-1]$ and a horizon $\bY[\cB, t:t+H-1]$. Here, $t$ can range from $L+1$ to $H$ steps before the end of the training set. $\cB$ denotes the indices of the time-series in the batch and the \texttt{batchSize} can be set as a hyper-parameter. When \texttt{batchSize} is greater than $N$, all the time-series are loaded in a batch.

We also load time derived features as covariates. The time-stamps corresponding to time-indices $t - L : t+H-1$ are converted to periodic features like minute of the hour, hour of the day, day of the week etc normalized to the scale $[-0.5, 0.5]$ as done in GluonTS~\citep{alexandrov2020gluonts}. In total we have 8 such features, many of which can stay constant depending on the granularity of the dataset.

\subsection{Hyperparameters}
\label{app:params}

Recall that in Section~\ref{sec:model}, we had the following hyper-parameters \texttt{temporalWidth}, \texttt{hiddenSize}, \texttt{numEncoderLayers}, \texttt{numDecoderLayers}, \texttt{decoderOutputDim} and \texttt{temporalDecoderHidden}. We also have hyper-parameters \texttt{layerNorm} and \texttt{dropoutLevel} that denote the global model level layer norm on/off and the probability of dropout. We also tune the maximum \texttt{learningRate} which is the input to a cosine decay learning rate schedule. In all our experiments \texttt{batchSize} is fixed to $512$ and \texttt{temporalWidth} is fixed to $4$. We also tune whether reversible instance normalization~\citep{kim2021reversible} is turned on or off. The tuning range of the hparams are provided in Table~\ref{tab:hptuning}. We use the validation loss to tune the hyper-parameters per dataset.


\begin{table}[ht]
\centering
\begin{tabular}{c|c}
\toprule
Parameter & Range \\
\midrule
\texttt{hiddenSize} &  [256, 512, 1024]\\
\hline
\texttt{numEncoderLayers} & [1, 2, 3] \\
\hline
\texttt{numDecoderLayers} & [1, 2, 3] \\
\hline
\texttt{decoderOutputDim}& [4, 8, 16, 32] \\
\hline
\texttt{temporalDecoderHidden} & [32, 64, 128] \\
\hline
\texttt{dropoutLevel} & [0.0, 0.1, 0.2, 0.3, 0.5] \\
\hline
\texttt{layerNorm} & [True, False] \\
\hline
\texttt{learningRate} & Log-scale in [1e-5, 1e-2]\\
\hline
\texttt{revIn} & [True, False] \\
\bottomrule
\end{tabular}
\caption{Ranges of different hyper-paramaters}
\label{tab:hptuning}
\end{table}

We report the specific hyper-parameters chosen for each dataset in Table~\ref{tab:allhparams}.

\begin{table*}[!ht]
  \centering
  \resizebox{\textwidth}{!}{
  \begin{tabular}{c|c|c|c|c|c|c|c|c|c}
    \toprule
    Dataset & \texttt{hiddenSize} & \texttt{numEncoderLayers} & \texttt{numDecoderLayers} & \texttt{decoderOutputDim} & \texttt{temporalDecoderHidden} & \texttt{dropoutLevel} & \texttt{layerNorm} & \texttt{learningRate} & \texttt{revIn} \\
    \midrule
    Traffic & 256 & 1 & 1 & 16 & 64 & 0.3 & False & 6.55e-5 & True \\ \hline
    Electricity & 1024 & 2 & 2 & 8 & 64 & 0.5 & True & 9.99e-4 & False \\ \hline
    ETTm1 & 1024 & 1 & 1 & 8 & 128 & 0.5 & True & 8.39e-5 & False \\ \hline
    ETTm2 & 512 & 2 & 2 & 16 & 128 & 0.0 & True & 2.52e-4 & True \\ \hline
    ETTh1 & 256 & 2 & 2 & 8 & 128 & 0.3 & True & 3.82e-5 & True \\ \hline
    ETTh2 & 512 & 2 & 2 & 32 & 16 & 0.2 & True & 2.24e-4 & True \\ \hline
    Weather & 512 & 1 & 1 & 8 & 16 & 0.0 & True & 3.01e-5 & False \\
    \bottomrule
  \end{tabular}
  }
  \caption{The hyper-parameters for different experimental settings.}\label{tab:allhparams}
\end{table*}

\end{document}